\setlist[itemize]{align=parleft,left=0pt..1em}
\newtheorem{example}{Example}
\newtheorem{definition}{Definition}
\newtheorem{lemma}{Lemma}
\newtheorem*{lemma*}{Lemma}
\newcommand{\etal}{\emph{et~al.}}
\title{Implicit Convolutional Kernels for Steerable CNNs}
\author{%
  Maksim Zhdanov\thanks{Work done while at Helmholtz-Zentrum Dresden-Rossendorf.} \\
  AMLab, University of Amsterdam \\
  \texttt{m.zhdanov@uva.nl} \\
  \And
  Nico Hoffmann \\
  Helmholtz-Zentrum \\ Dresden-Rossendorf \\
  \And
  Gabriele Cesa \\
  Qualcomm AI Research\thanks{Qualcomm AI Research is an initiative of Qualcomm Technologies, Inc.} \\
  AMLab, University of Amsterdam
}
\begin{document}

\maketitle

\begin{abstract}
Steerable convolutional neural networks (CNNs) provide a general framework for building neural networks equivariant to translations and transformations of an origin-preserving group $G$, such as reflections and rotations. They rely on standard convolutions with $G$-steerable kernels obtained by analytically solving the group-specific equivariance constraint imposed onto the kernel space. As the solution is tailored to a particular group $G$, implementing a kernel basis does not generalize to other symmetry transformations, complicating the development of general group equivariant models. We propose using implicit neural representation via multi-layer perceptrons (MLPs) to parameterize $G$-steerable kernels. The resulting framework offers a simple and flexible way to implement Steerable CNNs and generalizes to any group $G$ for which a $G$-equivariant MLP can be built. We prove the effectiveness of our method on multiple tasks, including N-body simulations, point cloud classification and molecular property prediction. %
\end{abstract}

\section{Introduction}
\label{sec:intro}

Equivariant deep learning is a powerful tool for high-dimensional problems with known data domain symmetry. By incorporating this knowledge as inductive biases into neural networks, the hypothesis class of functions can be significantly restricted, leading to improved data efficiency and generalization performance \cite{Bronstein2021GeometricDL}. Convolutional neural networks \cite{LeCun1998ConvolutionalNF}  (CNNs) are a prominent example as they are equivariant to translations. Group-equivariant CNNs \cite{Cohen2016GroupEC} (G-CNNs) generalize CNNs to exploit a larger number of symmetries via group convolutions, making them equivariant to the desired symmetry group, such as the Euclidean group $E(n)$ that encompasses translations, rotations, and reflections in $n$-dimensional Euclidean space. In physics and chemistry, many important problems, such as molecular modelling or point clouds, rely on the Euclidean group. Objects defined in physical space have properties that are invariant or equivariant to Euclidean transformations, and respecting this underlying symmetry is often desired for the model to perform as expected.

Neural networks can be parameterized in various ways to incorporate equivariance to the Euclidean group. One option is to use a message-passing neural network as a backbone and compute/update messages equivariantly via convolutions. This approach generalizes well to point clouds and graphs and offers the high expressivity of graph neural networks. Equivariant convolutional operators can be further categorized as regular \cite{Cohen2016GroupEC,Kondor2018OnTG,Bekkers2020BSplineCO, Finzi2020GeneralizingCN} or steerable group convolutions \cite{cohen_steerable_2016, Weiler20183DSC, generaltheory}. The latter recently proved to be especially suitable for incorporating physical and geometric quantities into a model \cite{Brandstetter2022GeometricAP}. The key idea behind Steerable CNNs is using standard convolution - which guarantees translation equivariance - with $G$-steerable kernels that ensure commutativity with the transformations of another group $G$, such as rotations.
The commutation requirement imposes a constraint onto the kernel space that must be solved analytically for each group $G$. This, in turn, does not allow generalizing a convolution operator tailored to a specific group to other symmetry transformations. In the case of the Euclidean group, Cesa \etal \cite{Cesa2022APT} proposed a generally applicable way of parameterizing steerable convolutions for sub-groups of $E(n)$. The method relies on adapting a pre-defined kernel basis explicitly developed for the group $E(n)$ to an arbitrary sub-group by using \emph{group restriction}. 

\begin{wrapfigure}{r}{0.5\textwidth}
\vspace*{-\intextsep}
\centering
\begin{tikzpicture}
\node[inner sep=0pt] at (0,0) {\includegraphics[width=0.49\textwidth,trim={0 2cm 0 1cm},clip]{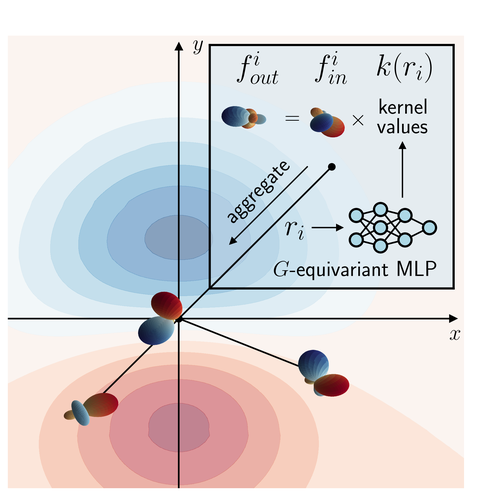}};
\node[anchor=north west, align=left, font=\small, fill=white, fill opacity=0.5, text opacity=1, rounded corners=5pt, text width=1.95cm] at (-3.25,2.7) {
    \textbullet~Flexible \& simple strategy $\forall \; G \leq O(n)$ \\ \vspace{2pt}
    \textbullet~Allows for edge attributes \\ \vspace{2pt}
    \textbullet~Task specific steer. kernels\\ \vspace{2pt}
};
\node[anchor=north west, align=left, font=\normalsize, fill=white, fill opacity=0.5, text opacity=1, rounded corners=5pt] at (-0.9,-2.15) {
    kernel values
};
\end{tikzpicture}
\caption{Illustration of the proposed approach: computing the response of an implicit kernel $k$ (background) of a steerable point convolution for the node $i$ (upper right corner) of a graph with steerable features (visualized as spherical functions).}
\label{fig:intro}
\vspace{-7pt}
\end{wrapfigure} 
However, because only a finite basis can be chosen, a basis tailored for $E(n)$ can be sub-optimal in terms of expressiveness for its sub-groups; see Section~\ref{sec:comparison_analytical}
more details. %
Hence, we propose an alternative way of building steerable convolutions based on implicit neural kernels, i.e. convolutional kernels implemented as continuous functions parameterized by MLPs \cite{Romero2022CKConvCK, romero2021flexconv}. We demonstrate how $G$-steerable convolutions with implicit neural kernels can be implemented from scratch for any sub-group $G$ of the orthogonal group $O(n)$.
The method allows us to ultimately minimize requirements to implement equivariance to new groups; see Section~\ref{sec:g.mlp}.
The flexibility of neural functions also permits the injection of geometric and physical quantities in point convolutions, increasing their expressiveness \cite{Brandstetter2022GeometricAP}; see Section~\ref{sec:k.expand}. We validate our framework on synthetic N-body simulation problem, point-cloud data (ModelNet-40 \cite{Wu20153DSA}) and molecular data (QM9 \cite{Wu2017MoleculeNetAB}) and demonstrate key benefits of our approach such as flexibility and generalizability. Besides, we demonstrate that implicit kernels allow Steerable CNNs to achieve performance competitive with state-of-the-art models and surpass them with the correct choice of the task-specific symmetry group.

\section{Background: Steerable Convolutions}
\label{sec:background}

In this work, we propose a general solution to easily build Steerable CNNs equivariant to translations \emph{and} any compact group $G$\footnote{We provide the definition for a group in Appendix \ref{sec:app.def}.}.
In Section~\ref{sec:g.repr}, we provide some necessary prerequisites from group theory and group representation theory \cite{serre1977linear}.
Then, we review the framework of Steerable CNNs and discuss the constraint it induces on the convolutional kernels in Section~\ref{sec:s_cnn}.

\subsection{Groups, Representations and Equivariance}
\label{sec:g.repr}

\begin{definition}[Group action]
An action of a group $G$ on a set $\mathcal{X}$ is a mapping $(g,x) \mapsto g.x$ associating a group element $g \in G$ and a point $x \in \mathcal{X}$ with some other point on $\mathcal{X}$ such that the following holds:
\begin{equation*}
    g.(h.x) = (gh).x \qquad \forall g,h \in G, x \in \mathcal{X}
\end{equation*}
\end{definition}

\begin{definition}[Group representation]
A linear representation $\rho$ of a group $G$ is a map $\rho : G \rightarrow \mathbb{R}^{d \times d}$ that assigns an invertible matrix $\rho(g) \; \forall g \in G$ and satisfies the following condition
\begin{equation*}
    \rho(gh) = \rho(g)\rho(h) \; \forall g,h \in G.
\end{equation*}
\end{definition}
\noindent A group representation $\rho(g): V \rightarrow V$ furthermore can be seen as a linear action of $G$ on a vector space $V$.
Additionally, if two (or more) vectors $v_1\in \mathbb{R}^{d_1}$ and $v_2 \in \mathbb{R}^{d_2}$ belong to vector spaces transforming under representations $\rho_1$ and $\rho_2$, their concatenation $v_1 \oplus v_2 \in \mathbb{R}^{d_1 + d_2}$ transforms under the \emph{direct sum} representation $\rho_1 \oplus \rho_2$.
$(\rho_1 \oplus \rho_2)(g)$ is a $d_1 + d_2$ dimensional block-diagonal matrix containing $\rho_1(g)$ and $\rho_2(g)$ in its diagonal blocks.

There are three types of group representations that are important for the definition of our method:
\begin{itemize}
    \item \textbf{Trivial representation:} all elements of $G$ act on $V$ as the identity mapping of $V$, i.e. $\rho(g) = I$.
    \item \textbf{Standard representation:} the group $O(n)$ of all orthogonal $n\times n$ matrices has a natural action on $V=\mathbb{R}^n$; similarly, if $G$ is a subgroup of $O(n)$,  elements of $G$ can act on $V=\mathbb{R}^n$ via the inclusion mapping, i.e. $\rho_{st}(g) = g \in \mathbb{R}^{n \times n}$.
    \item \textbf{Irreducible representations} is a collection of generally known representations that can be used as a building block for larger representations via \emph{direct sum}. As argued in \cite{Weiler2019}, Steerable CNNs can be parameterized without loss of generality solely in terms of irreducible representations.
\end{itemize}

\vspace{5pt}
\begin{definition}[Equivariance]
Let us have two spaces $\mathcal{X}, \mathcal{Y}$ endowed with a symmetry group $G$, i.e. with an action defined on them. A function $\phi: \mathcal{X} \rightarrow \mathcal{Y}$ is called \emph{$G$-equivariant}, if it commutes with the action of $G$ on the two spaces, i.e. $\phi(g.x) = g.\phi(x)$ for all $g \in G$, $x \in X$.
\end{definition}
As we have discussed, the layers of conventional CNNs are translation equivariant by design;
however, they do not commute with transformations of other groups, such as rotations and reflections.

\subsection{Steerable CNNs}
\label{sec:s_cnn}
Steerable CNNs provide a more general framework that allows building convolutions that are equivariant to a group of \emph{isometries} of $\mathbb{R}^n$, i.e. $(\mathbb{R}^n, +) \rtimes G \leq E(n)$. Those groups are decomposable as a semi-direct\footnote{See Appendix \ref{sec:app.def} for the definition of a semi-direct product.} product of the translations group $(\mathbb{R}^n, +)$ and an origin-preserving compact\footnote{To remain in the scope of the manuscript, we abstain from the mathematical definition of compact groups, which requires introducing topological groups. One can find more information about compact groups in \cite{Kondor2018OnTG}.} group $G \leq O(n)$, where $O(n)$ is the group of $n$-dimensional rotations and reflections. As translation equivariance is guaranteed \cite{Weiler20183DSC} by the convolution operator itself, one only has to ensure equivariance to $G$.
See \cite{weiler2023EquivariantAndCoordinateIndependentCNNs} for a more in-depth description of Steerable CNNs.

The feature spaces of Steerable CNNs are described as collections of \emph{feature fields}. A feature field of type $\rho$ is a feature map $f: \mathbb{R}^n \to \mathbb{R}^d$ endowed with a \emph{group representation} $\rho: G \to \mathbb{R}^{d \times d}$ that defines how an element $g \in G$ transforms the feature:
\begin{equation}
    [g.f](x) := \rho(g) f(g^{-1}.x)
\end{equation}
Furthermore, each convolutional layer is a map between feature fields. For the map to be equivariant, it must preserve the transformation laws of its input and output feature fields. In practice, it means that the following constraint onto the space of convolution kernels must be applied \cite{Weiler20183DSC}:
\begin{equation}
\label{eq:eq.constraint}
    k(g.x) = \rho_{out}(g) k(x) \rho_{in}(g)^{-1} \qquad \forall g \in G, x \in \mathbb{R}^n
\end{equation}
where $k: \mathbb{R}^n \rightarrow \mathbb{R}^{d_{out} \times d_{in}}$, and $\rho_{in}: G\to  \mathbb{R}^{d_{in} \times d_{in}}$, $\rho_{out} : G \to \mathbb{R}^{d_{out} \times d_{out}}$ are respective representations of input and output feature fields. 

To parameterize the kernel $k$, the constraint in equation \ref{eq:eq.constraint} needs to be solved \emph{analytically} for each specific group $G$ of interest. This renders a general solution challenging to obtain and limits the applicability of steerable convolutions.

\section{Implicit neural kernels}
Instead of deriving a steerable kernel basis for each particular group $G$, we propose parameterizing the kernel $k: \mathbb{R}^n \to \mathbb{R}^{d_{out} \times d_{in}}$ with an MLP satisfying the constraint in Eq.~\ref{eq:eq.constraint}. The approach only requires the $G$-equivariance of the MLP and suggests a flexible framework of implicit steerable convolutions that generalizes to arbitrary groups $G \leq O(n)$.
We argue about the minimal requirements of this approach in Section~\ref{sec:g.mlp}.

We first define the kernel as an equivariant map between vector spaces that we model with an MLP (see Section \ref{sec:vec.k}). Then, we demonstrate that $G$-equivariance of an MLP is a sufficient condition for building the implicit representation of steerable kernels for \emph{compact} groups. 
We indicate that the flexibility of neural representation allows expanding the input of a steerable kernel in Section \ref{sec:k.expand}. Next, we describe how a $G$-equivariant MLP can be implemented in section \ref{sec:g.mlp}. Later, we describe how one can implement $G$-steerable point convolution in the form of equivariant message passing \cite{Satorras2021EnEG, Brandstetter2022GeometricAP} in Section \ref{sec:point.conv} and its generalization to dense convolution in Section \ref{sec:dense.conv}. Finally, we compare our method with the solution strategy proposed in \cite{Cesa2022APT} in Section \ref{sec:comparison_analytical}.

\subsection{Kernel vectorization and equivariance}
\label{sec:vec.k}
Our goal is to implement the kernel $k : \mathbb{R}^{n} \rightarrow \mathbb{R}^{d_{out} \times d_{in}}$ of a $G$-steerable convolution that maps between spaces of feature fields with representations $\rho_{in}$ and $\rho_{out}$.
The kernel itself is a function whose input in $\mathbb{R}^n$ transforms under the \emph{standard representation} $\rho_{st}$
(as defined in Section~\ref{sec:g.repr})
and which we will model with an MLP. 
Since MLPs typically output vectors, it is convenient to \emph{vectorize} the $d_{out} \times d_{in}$ output of the kernel. We denote the column-wise vectorization of a matrix $M \in \mathbb{R}^{d_1\times d_2}$ as $vec(M) \in \mathbb{R}^{d_1d_2}$. Henceforth, we will consider kernel's vector form $vec(k(\cdot)):  \mathbb{R}^{n} \rightarrow \mathbb{R}^{d_{out} d_{in}}$.

Let $\otimes$ denote the \emph{Kronecker product} between two matrices.
Then, $\rho_\otimes(g) := \rho_{in}(g) \otimes \rho_{out}(g)$ is also a representation\footnote{This representation is formally known as the \emph{tensor product} of the two representations.} of $G$.
We suggest an implicit representation of the vectorized kernel $k$ using an $G$-equivariant MLP $\phi: \mathbb{R}^{n} \rightarrow \mathbb{R}^{d_{out} d_{in}}$ based on the following lemma (see \ref{sec:app.proof} for the proof):
\begin{lemma}
\label{eq:lemma}
If a kernel $k$ is parameterized by a $G$-equivariant MLP $\phi$ with input representation $\rho_{st}$ and output representation $\rho_\otimes := \rho_{in} \otimes \rho_{out}$ , i.e. $vec(k)(x) := \phi(x)$, then the kernel satisfies the equivariance constraint in Equation~\ref{eq:eq.constraint} for a compact group $G$.
\end{lemma}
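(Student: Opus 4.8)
The plan is to translate the matrix-valued equivariance constraint of Equation~\ref{eq:eq.constraint} into an equivalent statement about the vectorized kernel, and then observe that this vectorized statement is precisely the $G$-equivariance of $\phi$ with respect to the representations $\rho_{st}$ on the input and $\rho_\otimes = \rho_{in} \otimes \rho_{out}$ on the output. The bridge between the two formulations is the standard identity relating vectorization and the Kronecker product: for matrices $A, B, M$ of compatible shapes,
\begin{equation*}
  vec(A M B^\top) = (B \otimes A)\, vec(M).
\end{equation*}
First I would take the constraint $k(g.x) = \rho_{out}(g)\, k(x)\, \rho_{in}(g)^{-1}$ and apply $vec(\cdot)$ to both sides. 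The left-hand side becomes $vec(k(g.x))$, which by definition of $\phi$ equals $\phi(g.x)$. For the right-hand side, since $G$ is compact we may assume (after a change of basis, or because we work with orthogonal representations) that $\rho_{in}(g)$ is orthogonal, so $\rho_{in}(g)^{-1} = \rho_{in}(g)^\top$; then $vec(\rho_{out}(g)\, k(x)\, \rho_{in}(g)^\top) = (\rho_{in}(g) \otimes \rho_{out}(g))\, vec(k(x)) = \rho_\otimes(g)\, \phi(x)$.

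Putting the two sides together, the kernel constraint is equivalent to $\phi(g.x) = \rho_\otimes(g)\, \phi(x)$ for all $g \in G$ and $x \in \mathbb{R}^n$, where $g.x = \rho_{st}(g) x$ is the standard action on the input. This is exactly the defining property of a $G$-equivariant map $\phi : \mathbb{R}^n \to \mathbb{R}^{d_{out} d_{in}}$ with input representation $\rho_{st}$ and output representation $\rho_\otimes$. Hence if $\phi$ is such an equivariant MLP, reading the chain of equalities in reverse shows that $vec(k)$, and therefore $k$ itself, satisfies Equation~\ref{eq:eq.constraint}. I would also note that $\rho_\otimes$ is indeed a valid representation — this is just the fact that the tensor (Kronecker) product of two representations is a representation, already asserted in the text — so the hypothesis on $\phi$ is well-posed.

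The only genuinely delicate point is the use of compactness to replace $\rho_{in}(g)^{-1}$ by $\rho_{in}(g)^\top$: the clean Kronecker identity above is stated for $vec(AMB^\top)$, so to get $\rho_\otimes(g) = \rho_{in}(g) \otimes \rho_{out}(g)$ rather than $(\rho_{in}(g)^{-\top}) \otimes \rho_{out}(g)$ we need $\rho_{in}$ orthogonal. Compactness guarantees this: every representation of a compact group is equivalent to an orthogonal (unitary, over $\mathbb{R}$) one via the standard averaging / Haar-measure argument, and since the paper already restricts to irreducible — hence, after the usual normalization, orthogonal — representations, the substitution is justified. If one prefers to avoid the change of basis, the alternative is to carry the factor $\rho_{in}(g)^{-\top}$ through and simply define $\rho_\otimes(g) := \rho_{in}(g)^{-\top} \otimes \rho_{out}(g)$, which is still a representation; but matching the statement as written requires the orthogonality remark, so I would include a sentence making the compactness hypothesis do exactly this work. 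Everything else is a mechanical unwinding of definitions.
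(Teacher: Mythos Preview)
Your proposal is correct and follows essentially the same approach as the paper: both rely on the Kronecker--vectorization identity $vec(ABC) = (C^\top \otimes A)\,vec(B)$ and on the compactness of $G$ to replace $\rho_{in}(g)^{-1}$ by $\rho_{in}(g)^\top$. The only cosmetic difference is direction---the paper starts from the equivariance of $\phi$ and derives the kernel constraint, whereas you vectorize the constraint first and then read the chain backwards---but the content is identical.
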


In other words, $G$-equivariance of MLP is a sufficient condition for $G$-equivariance of the convolutional layer whose kernel it parameterizes. Using implicit kernels also has a very favourable property - it allows arbitrary steerable features as its input, which we discuss in the following section.

\subsection{Expanding the input}
\label{sec:k.expand}
Note that in the case of standard steerable convolutions, the input $x \in \mathbb{R}^n$ of a kernel is usually only the difference between the spatial position of two points. However, there is no requirement that would disallow the expansion of the input space except for practical reasons. 
Hence, here we augment steerable kernels with an additional feature vector $z \in \mathbb{R}^{d_z}$.
This formulation allows us to incorporate relevant information in convolutional layers, such as physical and geometric features. For example, when performing convolutions on molecular graphs, $z$ can encode the input and output atoms' types and yield different responses for different atoms.
If introducing additional arguments into a kernel, the steerability constraint in equation~\ref{eq:eq.constraint} should be adapted to account for transformations of $z$:
\begin{equation}
\label{eq:eq.constraint_z}
    k(g.x,\rho_z(g)z) = \rho_{out}(g) k(x,z) \rho_{in}(g)^{-1}
\end{equation}
which must hold for all $ g \in G, x \in \mathbb{R}^n, z \in \mathbb{R}^{d_z}$, where $\rho_z: G \to \mathbb{R}^{d_z \times d_z}$ is the representation of $G$ acting on the additional features.

Again, analytically solving the constraint \ref{eq:eq.constraint_z} to find a kernel basis for arbitrary $\rho_z$ is generally unfeasible.
Note also that the solution strategy proposed in \cite{Cesa2022APT} requires a basis for functions over $\mathbb{R}^{n + d_z}$, whose size tends to grow exponentially with $d_z$ and, therefore, is not suitable. 
Alternatively, we can now use the flexibility of neural representation and introduce additional features into a kernel at no cost. 

\subsection{Implementing a $G$-equivariant MLP}
\label{sec:g.mlp}
We are now interested in how to build an MLP that is equivariant to the transformations of the group $G$, i.e. a sequence of equivariant linear layers alternated with equivariant non-linearities \cite{Shawe-Taylor_1989, universalgroupmlp, Finzi2021APM}. It is important to say that our approach does not rely on a specific implementation of $G$-MLPs, and any algorithm of preference might be used (e.g. \cite{Finzi2021APM} or enforcing via an additional loss term). 

The approach we employed in our experiments is described below. Since the irreducible representations of $G$ are typically known\footnote{
    This is a reasonable assumption since irreducible representations are often used to construct other representations used in Steerable CNNs. If not, there exist numerical methods to discover them from the group algebra.
}, one can always rely on the following properties: \emph{1)} any representation $\rho$ of a compact group $G$ can be decomposed as a direct sum of irreducible representations $\rho(g) = Q^T \left(\bigoplus_{i \in I} \psi_i(g) \right) Q$ with a change of basis $Q$\footnote{\cite{Cesa2022APT} describes and implements a numerical method for this.} and \emph{2)} \emph{Schur's Lemma}, which states that there exist equivariant linear maps only between the irreducible representation of the same kind\footnote{Typically, these maps only include scalar multiples of the identity matrix.}.
Hence, one can apply the right change of basis to the input and output of a linear layer and then learn only maps between input and output channels associated with the same irreducible representations.
In the context of implicit kernels, the tensor product representation $\rho_{in} \otimes \rho_{out}$ in the last layer is decomposed by a matrix containing the Clebsch-Gordan coefficients, which often appears in the analytical solutions of the kernel constraint in the related works. %
Note that the non-linearity $\sigma$ used in the Steerable CNNs are $G$-equivariant and, therefore, can be used for the MLP as well.

\subsection{$G$-steerable point convolutions}
\label{sec:point.conv}
Point convolutions operate on point clouds - sets of $N$ points endowed with spatial information %
$X = \{x_i\}_{i=0}^{N-1} \in \mathbb{R}^{n \times N}$. A point cloud thus provides a natural discretization of the data domain, which renders a convolution operator as follows:
\begin{equation}
\label{eq:sg.cnn}
    f_{out}(x_i) = (k \ast f_{in})(x_i) = \sum\limits_{0 \leq j \leq N-1} k(x_i-x_j)f_{in}(x_j)
\end{equation}
To reduce the computational cost for large objects, one can induce connectivity onto $x \in X$ and represent a point cloud as a graph $\mathcal{G} = (\mathcal{V}, \mathcal{E})$ with nodes $v_i \in \mathcal{V}$ and edges $e_{ij} \in \mathcal{E} \subseteq \mathcal{V} \times \mathcal{V}$, where each node $v_i$ has a spatial location $x_i$, node features $z_i$ and a corresponding feature map $f_{in}(x_i)$ (see Figure \ref{fig:intro}). Additionally, each edge $e_{ij}$ can have an attribute vector $z_{ij}$ assigned to it (as in Section~\ref{sec:k.expand}). This allows for a learnable message-passing point convolution whose computational cost scales linearly with the number of edges:
\begin{equation}
\label{eq:sg.cnn_z}
    f_{out}(x_i) = (k \ast f_{in})(x_i) = \sum\limits_{j \in \mathcal{N}(i)} k(x_i-x_j, z_i, z_j, z_{ij})f_{in}(x_j)
\end{equation}
where $\mathcal{N}(i) = \{j : (v_i, v_j) \in \mathcal{E}\}$ and the kernel $k(\cdot)$ is parameterized by a $G$-equivariant MLP.

\subsection{Extension to $G$-steerable CNNs}
\label{sec:dense.conv}
Straightforwardly, the proposed method can be extended to dense convolutions. In such case, the kernel is defined as $k : \mathbb{R}^{n} \rightarrow \mathbb{R}^{c_{out} \times c_{in} \times K^{n}}$ - that is, a continuous function that returns a collection of $K \times K \times...$ kernels given a relative position (and, optionally, arbitrary steerable features). In this case, the vectorized kernel is parameterized in exactly the same way as described above but is estimated at the center of each pixel.

\subsection{Comparison with the analytical solution}
\label{sec:comparison_analytical}

A general basis for any $G$-steerable kernel is described in \cite{Cesa2022APT}. Essentially, it relies on two ingredients:
\emph{i)} a (pre-defined) finite $G$-steerable basis \cite{Freeman1991-STEER} for \emph{scalar} filters and \emph{ii)} a learnable equivariant linear map. Let us look at those in detail. Firstly, a finite $G$-steerable basis is essentially a collection of $B$ orthogonal functions, i.e. $Y: \mathbb{R}^n \to \mathbb{R}^B$, with the following equivariant property: $Y(g.x) = \rho_Y(g) Y(x)$, for some representation $\rho_Y$ of $G$.
The linear map in \emph{ii)}, then, is a general equivariant linear layer, whose input and output transform as $\rho_Y$ and $\rho_{in} \otimes \rho_{out}$.

In practice, it means that one has to provide a pre-defined basis $Y$ for a group of interest $G$. Since the design might not be straightforward, \cite{Cesa2022APT} suggest a way to \emph{reuse} an already derived $O(n)$ steerable basis for any subgroup $G \subset O(n)$. While general, such a solution can be sub-optimal. For example, if $n=3$, an $O(3)$-steerable basis has local support inside a sphere, which is suitable for the group of 3D rotations $SO(3)$ but not ideal for cylindrical symmetries, i.e. when $G$ is the group $SO(2)$ of planar rotations around the Z axis. 

In comparison, the solution proposed in Section \ref{sec:vec.k} replaces the pre-defined basis $Y$ with a learnable $G$-MLP. It allows us to learn $G$-specific kernels without relying on a pre-derived basis for a larger group, which in turn means that we can theoretically obtain $G$-optimal kernel basis via learning (see \ref{sec:app.comparison_analytical} for further details). Furthermore, the kernels defined in Section \ref{sec:vec.k} can now be conditioned on arbitrary steerable features, which makes them more task-specific and expressive. In the context of a general basis, one can interpret the last linear layer of an implicit kernel as the map in \emph{ii)}, and the activations before this layer as a learnable version of the basis $Y$ in \emph{i)}.

\section{Related works}

\paragraph{Group convolutions.} 
Multiple architectures were proposed to achieve equivariance to a certain symmetry group. It has been proven \cite{Kondor2018OnTG} that the convolutional structure is a sufficient condition for building a model that is equivariant to translations and actions of a compact group. One can separate group convolutions into two classes depending on the space on which the convolution operates: \emph{regular} \cite{Cohen2016GroupEC,Kondor2018OnTG,bekkers2018roto,Bekkers2020BSplineCO, Finzi2020GeneralizingCN} and \emph{steerable} \cite{cohen_steerable_2016, Worrall2017-HNET,Weiler20183DSC, Thomas2018TensorFN, generaltheory, Weiler2019, Cesa2022APT} group convolutions. In the first case, the input signal is represented in terms of scalar fields on a group $G$, and the convolution relies on a discretization of the group space. Steerable convolutions are a class of $G$-equivariant convolutions that operate on feature fields over homogeneous spaces and achieve equivariance via constraining the kernel space. They further avoid the discretization of the group space and can reduce the equivariance error in the case of continuous groups.

\paragraph{$G$-steerable kernels.} In the case of rigid body motions $G = SO(3)$, the solution of the equivariance constraint is given by spherical harmonics modulated by an arbitrary continuous radial function, which was analytically obtained by Weiler \etal \cite{Weiler20183DSC}. Lang and Weiler \cite{lang2020wigner} then applied the Wigner-Eckart theorem to parametrize $G$-steerable kernel spaces over orbits of a compact $G$. The approach was later generalized by Cesa \etal \cite{Cesa2022APT}, who proposed a solution for any compact sub-group of $O(3)$ based on group restriction. Using this approach, one can obtain a kernel basis for a group $G \leq  H$ if the basis for $H$ is known. Despite its generalizability, the method still requires a pre-defined basis for the group $H$ that is further adapted to $G$. The resulting solution is not guaranteed to be optimal for $G$; see Section~\ref{sec:comparison_analytical}. We note that the practical value of steerable kernels is also high as they can be used for convolution over arbitrary manifolds in the framework of Gauge CNNs \cite{cohen2019gauge,de2020gauge,weiler2021coordinateIndependentGauge}.

\paragraph{Implicit kernels.}
Using the implicit representation of convolution kernels for regular CNNs is not novel. It was used, for example, to model long-range dependencies in sequential data \cite{Romero2022CKConvCK} or for signal representation \cite{Sitzmann2020ImplicitNR}. Romero \etal \cite{Romero2022CKConvCK} demonstrated that such parametrization allows building shallower networks, thus requiring less computational resources to capture global information about the system. Continuous kernels were recently used to build an architecture \cite{Knigge2023ModellingLR} for processing data of arbitrary resolution, dimensionality and length, yet equivariant solutions are scarce \cite{Ouderaa2022RelaxingEC}. Finzi \etal \cite{Finzi2020GeneralizingCN} proposed parametrizing convolutions on Lie groups as continuous scalar functions in the group space. The method relies on discretizing a continuous group, which might lead to undesirable stochasticity of the model's output. Instead, we use Steerable CNNs that define the kernel as a function on a homogeneous space. While the discretization of this space is still required, in most cases, it is naturally given by the data itself, e.g. for point clouds; hence, no sampling error arises. It is also important to mention the key difference between implicit kernels and applying $G$-MLPs \cite{Finzi2021APM} directly - the latter is incapable of processing image/volumetric data as convolutions do. Henceforth, we focus on CNNs with consideration for potential extensions to various data modalities.

\paragraph{Equivariant point convolutions.}
A particular momentum has been gained by point convolutions in the form of equivariant message-passing \cite{Schtt2017SchNetAC, Thomas2018TensorFN, Satorras2021EnEG, Brandstetter2022GeometricAP} specifically for problems where symmetry provides a strong inductive bias such as molecular modelling \cite{Anderson2019CormorantCM} or physical simulations \cite{Fuchs2020SE3Transformers3R}. Thomas \etal \cite{Thomas2018TensorFN} pioneered $SE(3)$-equivariant steerable convolutions whose kernels are based on spherical harmonics modulated by a radial function. The approach was further generalized by Batzner \etal \cite{Batzner2022E3equivariantGN}, who uses an MLP conditioned on the relative location to parameterize the radial function, although the basis of spherical harmonics is preserved. 
Brandstetter \etal \cite{Brandstetter2022GeometricAP} demonstrated that introducing geometric and physical information into an equivariant message-passing model improves the expressivity on various tasks, which we also observe in this work. %
Note that, for $G=SO(3)$ or $O(3)$ and without additional edge features $z$, our MLP can only learn a function of the radius and, therefore, is equivalent to the models proposed in \cite{Batzner2022E3equivariantGN}.

\section{Experiments}
\label{sec:experiments}

In this section, we implement Steerable CNNs with implicit kernels and apply them to various tasks\footnote{All datasets were downloaded and evaluated by Maksim Zhdanov (University of Amsterdam).}. First, we indicate the importance of correctly choosing the symmetry group on a synthetic N-body simulation problem where an external axial force breaks the rotational symmetry (see Section~\ref{sec:nbbody}). Then, we prove the generalizability of the proposed approach as well as the gain in performance compared to the method proposed in \cite{Cesa2022APT} on ModelNet-40 (see Section~\ref{sec:mn}). Afterwards, we show that one can introduce additional physical information into a kernel and significantly improve the performance of steerable convolutions on molecular data (see Section~\ref{sec:qm}). Code and data to reproduce all experiments are available on \href{https://github.com/maxxxzdn/implicit-steerable-kernels}{GitHub}.

\subsection{Implementation}
\label{sec:implement}
\paragraph{Implicit kernels.} To parameterize implicit kernels, we employ linear layers (see Section~\ref{sec:g.mlp}) followed by quotient ELU non-linearities \cite{Cesa2022APT}. The last layer generates a steerable vector, which we reshape to yield a final convolutional kernel. The $G$-MLP takes as input a steerable vector obtained via direct sum (concatenation) of batch-normalized harmonic polynomial representation of the relative location $x$, edge features $z_{ij}$ and input node features $z_{i}$. See details on pre-processing and task-specific inputs in Appendix \ref{sec:app.prep}.

\paragraph{Steerable convolutions.} 

We employ steerable point convolutional layers as described in Equation \ref{eq:sg.cnn_z}. For each task, we tune hyperparameters of all models on validation data: the number of layers, the number of channels in each layer, the depth and the width of implicit kernels, and the number of training epochs. For all the experiments, except the one reported in Table~\ref{tab:exp4}, we yield a number of parameters similar to the baselines with a deviation of less than 10\%. For ModelNet-40 experiments, our architecture is partially inspired by the model introduced in \cite{Poulenard2021AFA}, which uses gradual downsampling of a point cloud\footnote{Poulenard \etal \cite{Poulenard2021AFA} use kd-tree pooling to compute coarser point clouds, while we only use random sampling of points. Note also that the spherical quotient ELU we used is similar in spirit to the proposed functional non-linearity described there, yet it does not employ a deep MLP.}. For N-body and QM9 experiments, we add residual connections to each steerable layer to learn higher-frequency features \cite{Giovanni2022GraphNN}. 
For QM9 and ModelNet-40 experiments, the last steerable layer returns a vector with invariant features for each node, to which we apply global pooling to obtain a global representation of an object, which is further passed to a classification MLP. In the case of N-body experiments, we output the standard representation corresponding to the particles' coordinates. Details about optimization and model implementation for each task can be found in Appendix \ref{sec:app.exps}.

\setlength{\belowcaptionskip}{-10pt}
\begin{figure}[t]
    \centering
    \begin{minipage}{0.33\textwidth}
        \centering
        \includegraphics[width=\textwidth]{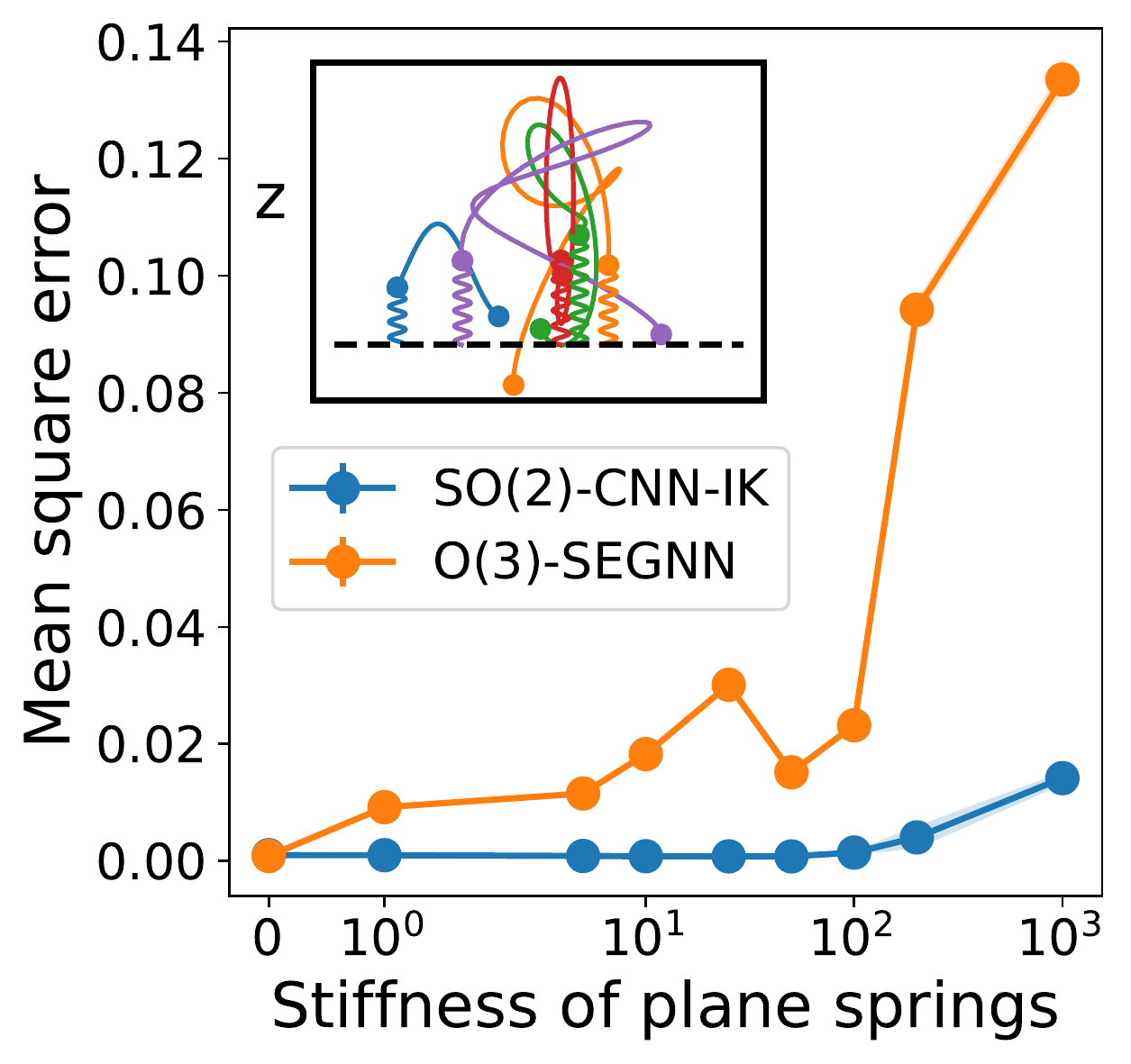} %
        \caption{Final position estimation in the N-body system experiment, where particles are connected to one another and to the XY-plane by springs (see left upper corner). Our model with correct axial symmetry $SO(2)$, significantly outperforms the state-of-the-art model SEGNN \cite{Brandstetter2022GeometricAP}, which is $O(3)$-equivariant, as the relative contribution of the plane string increases.}
        \label{fig:exp1}
    \end{minipage}
    \hfill
    \begin{minipage}{0.63\textwidth}
        \centering
        \includegraphics[width=\textwidth,trim={0 0.5cm 0 0},clip]{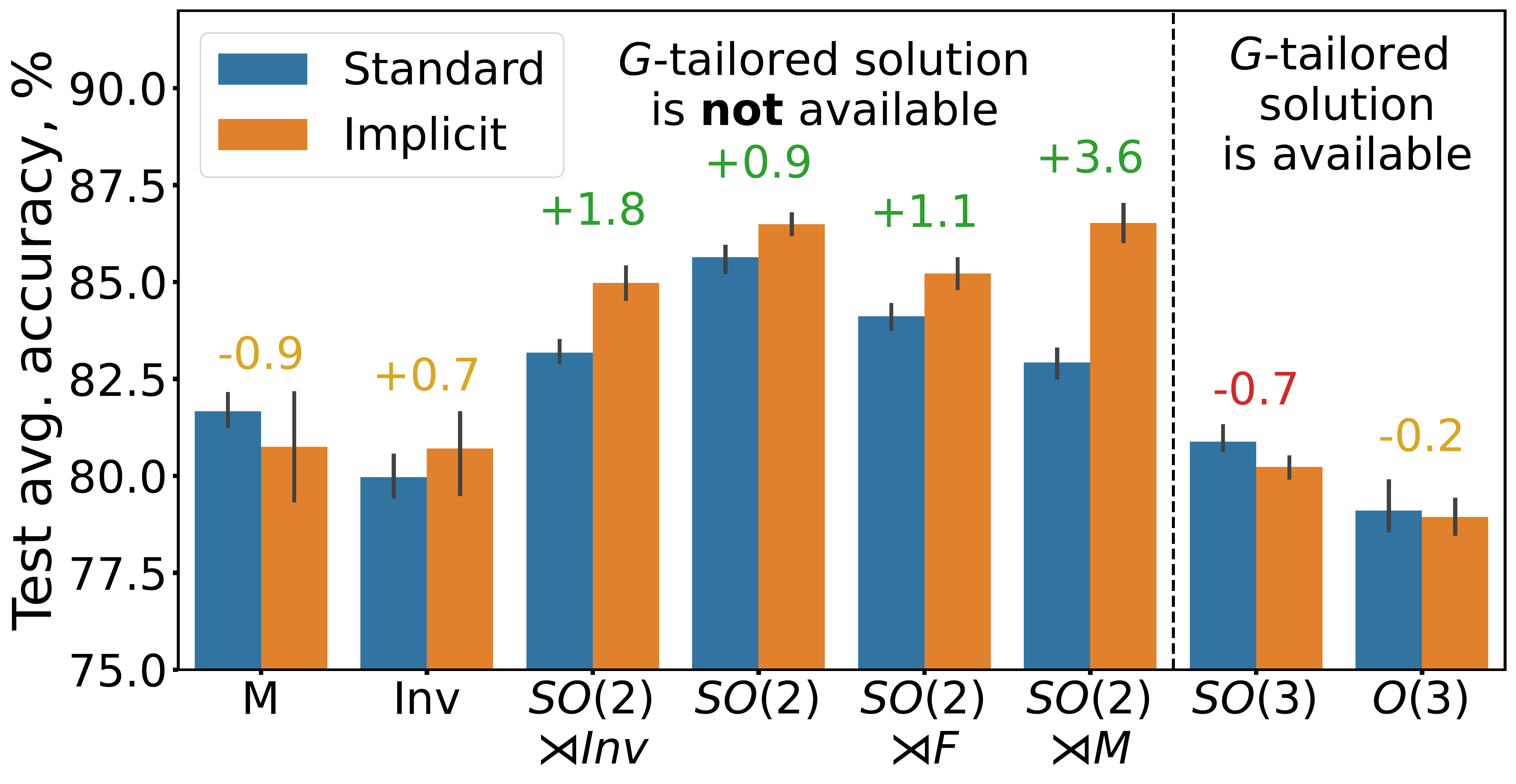} %
        \caption{Performance comparison of Steerable CNNs on the rotated ModelNet-40 dataset for different $G$. Bars show mean average accuracy with error bars indicating standard deviation, both computed from 5 runs. The numbers above bars denote the performance gain of implicit kernels (orange) over \cite{Cesa2022APT} (blue). Statistically significant ($p<0.05$) positive differences are green, negative ones are red, and insignificant ones are yellow. $SO(2)$ contains planar rotations around the Z axis, $M$ and $Inv$ contain mirroring along the $X$ axis and origin respectively, while $F$ contains rotations by $\pi$ around the $X$ axis. $O(2) \cong SO(2) \rtimes M$ achieves the best accuracy as it best represents the real symmetries of the data.}
        \label{fig:exp2}
    \end{minipage}
\end{figure}

\subsection{The relevance of smaller $G<O(n)$: N-body simulation}
\label{sec:nbbody}
\paragraph{Dataset.}
We conduct experiments using the N-body system \cite{kipf2018neural}, where particles are connected through springs, and their interaction is governed by Hooke's law. Similar to previous studies \cite{Satorras2021EnEG, Brandstetter2022GeometricAP}, we modify the original trajectory prediction task to calculate the position of each particle in a 3-dimensional space, given its initial position and velocity. We attach each particle to the XY plane using strings with random equilibrium lengths and pre-defined stiffness, which can slide freely over the plane (see Figure \ref{fig:exp1}, upper left corner). This additional force term breaks the rotational symmetry of the system into only azimuthal symmetry; this system resembles a simplified version of the problem of molecule binding to the surface of a larger molecule. We choose the model's hyperparameters to have a similar parameter budget to the SEGNN model \cite{Brandstetter2022GeometricAP}, which has shown state-of-the-art performance on a similar task (we also compare against a non-equivariant baseline; see Table \ref{tab:app_nbody}). For all models, we use the highest frequency of hidden representations in both $G$-MLP and Steerable CNNs equal to $1$. We use the velocity of a particle and the equilibrium length of the attached XY spring as input; the model's output transforms under the standard representation. We train a separate model for each value of plane strings' stiffness (3000 training points) to measure the impact of symmetry breaks on performance. 

\paragraph{Results.}
As can be seen in Fig. \ref{fig:exp1}, a Steerable CNN with azimuthal symmetry $SO(2)$ significantly outperforms SEGNN, which is equivariant to a larger group $O(3)$. Since we introduced a force term that now breaks the rotational symmetry, SEGNN struggles to learn it. Furthermore, while in the default setting (plane strings' stiffness is 0), models achieve roughly the same performance, the absolute difference grows exponentially once the plane strings are introduced. 

The synthetic problem is meant to show the importance of choosing the correct symmetry when designing a model. While the Euclidean group E(3) is often enough for N body systems in a vacuum, it is important to be careful when an external influence or underlying structure can disrupt global symmetries and negatively impact a model with a larger symmetry group. This can be relevant in scenarios like molecular docking simulations \cite{Corso2022DiffDockDS}, where the system can align with the larger molecule, or material science \cite{Kaba2022EquivariantNF}, where the arrangement of atoms and crystal lattice structures yield a discrete group of symmetries smaller than $O(n)$.

\subsection{Generalizability of implicit kernels: ModelNet-40}
\label{sec:mn}
\paragraph{Dataset.}
The ModelNet-40 \cite{Wu20153DSA} dataset contains 12311 CAD models from the 40 categories of furniture with the orientation of each object aligned. The task is to predict the category of an object based on its point cloud model. 2468 models are reserved for the test partition. From the remaining objects, we take 80\% for training and 20\% for validation. We augment each partition with random rotations around the Z-axis. We induce connectivity on point clouds with a k-nearest neighbour search with $k=10$ at each model layer and use normals as input node features.
\begin{wraptable}{R}{0.4\textwidth}
\vspace*{-\intextsep}
\vspace{-5pt}
\caption{\label{tab:mn}Overall accuracy (OA) on ModelNet-40. Group equivariant methods are denoted by $*$.}
\vspace{15pt}
\centering
\begin{tabular}{lc}
\hline& \\[\dimexpr-\normalbaselineskip+2pt]
Method                                                        & OA, \% \\
\hline& \\[\dimexpr-\normalbaselineskip+2pt]
Spherical-CNN \cite{Esteves2017LearningSE} $*$  & 88.9     \\
SE(3)-ESN \cite{Franzen2021NonlinearitiesIS} $*$ & 89.1     \\
TFN{[}mlp{]} P \cite{Poulenard2021AFA} $*$      & 89.4     \\
PointNet++ \cite{qi2017pointnetplusplus}     & 91.8     \\
SFCNN \cite{Rao2019SphericalFC}              & 92.3     \\
PointMLP \cite{Ma2022RethinkingND}           & 94.5     \\ \hline& \\[\dimexpr-\normalbaselineskip+2pt]
Ours & 89.3 $\pm$ 0.06\\                                       
\hline
\end{tabular}
\vspace{-5pt}
\end{wraptable} 

\paragraph{Results.}
In this section, we demonstrate how one can build a Steerable CNN that is equivariant to an arbitrary subgroup of the Euclidean group $G \rtimes (\mathbb{R}^3,+) \leq E(3)$. We compare the performance of implicit kernels with the standard steerable kernels obtained by group restriction \cite{Cesa2022APT} and keep the number of parameters similar. The results are shown in Figure \ref{fig:exp2}. Implicit kernels achieve significant improvement in accuracy on test data for the majority of groups. The only statistically significant negative difference is presented for $G = SO(3)$, for which a tailored and hence optimal kernel basis is already available. When a custom solution is unknown, implicit kernels often significantly outperform the previously proposed method. Therefore, they pose an efficient toolkit for building a kernel basis for an arbitrary subgroup of $E(3)$. We report the result of the best-performing model in Table~\ref{tab:mn}. Although we do not outperform more task-specific and involved approaches such as PointMLP \cite{Ma2022RethinkingND}, our model is on par with other group equivariant models. We emphasize that our model is a simple stack of convolutional layers and hypothesize that a smarter downsampling strategy, residual connections and hierarchical representations would significantly improve the overall performance of our model, which we leave for further research.

\subsection{Flexibility of implicit kernels: QM9}
\label{sec:qm}

\paragraph{Dataset.}
The QM9 dataset \cite{Wu2017MoleculeNetAB} is a public dataset consisting of about $130$k molecules with up to 29 atoms per molecule. Each molecule is represented by a graph with nodes denoting atoms and edges indicating covalent bonds. Each node is assigned a feature vector consisting of one-hot encoding of the type of the respective atom (H, C, N, O, F) and its spatial information corresponding to a low energy conformation. Additionally, each molecule is described by 19 properties from which we select 12 commonly taken in literature \cite{Brandstetter2022GeometricAP} for regression tasks. Different from common practice \cite{Brandstetter2022GeometricAP}, we perform convolution on molecular graphs, i.e. with connectivity pre-defined by molecular structure instead of inducing it. The design choice is motivated by the presence of edge features, which we include in an implicit kernel to display the flexibility of neural representation. 

\begin{wrapfigure}{R}{0.5\textwidth}
\vspace{5pt}
\centering
\includegraphics[width=0.5\textwidth, trim={0 1cm 0 2cm}]{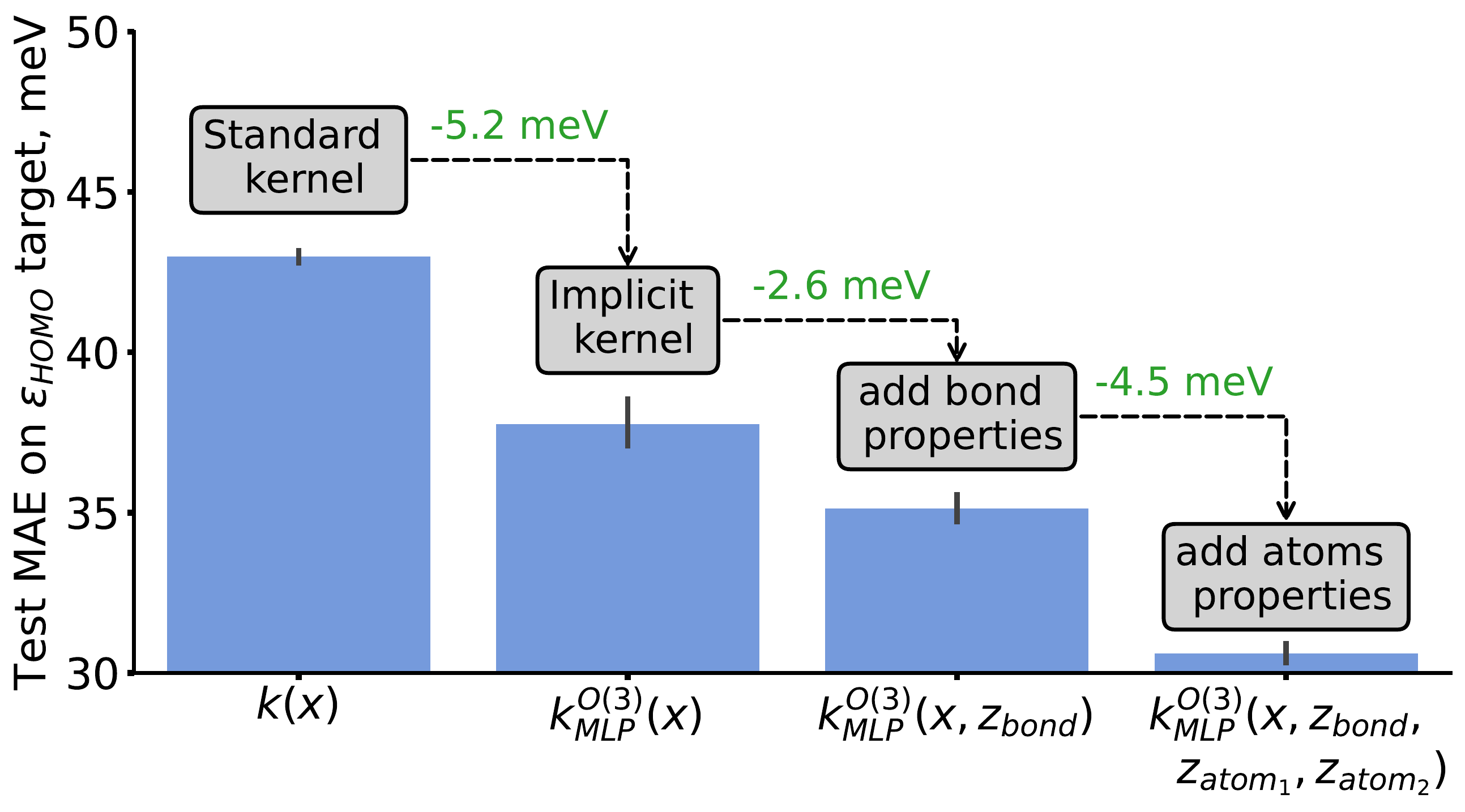}
\caption{Using implicit kernels $k_{MLP}^{G}$ and injecting it with bond and atoms properties significantly improves the performance of Steerable CNNs on the QM9 dataset (Mean Absolute Error on the $\varepsilon_{HOMO}$ regression problem). Bars denote mean average accuracy on the test dataset with error bars corresponding to standard deviation; both computed on 5 runs. The kernel is $O(3)$-equivariant rendering the final architecture $E(3)$-equivariant.}
\label{fig:exp3}
\vspace{+0pt}
\end{wrapfigure} 

\paragraph{Results.}
We first demonstrate how one can use the flexibility of neural representation to introduce additional features of choice into a steerable convolutional layer. As each pair of atoms connected by a covalent bond is assigned a one-hot encoding of the bond type $z_{ij}$, we use it as a condition for $O(3)$-equivariant implicit kernels. Additionally, we follow Musaelian \etal \cite{Musaelian2022LearningLE} and embed one-hot encoding of the center and neighbour atom types $z_{i}$ and $z_{j}$ into the MLP. We include each property one by one and indicate corresponding performance gain in Figure \ref{fig:exp3}. First, we observed a sharp improvement by switching from standard steerable kernels to implicit ones. We attribute it to the higher expressivity of implicit kernels and their ability to learn more complex interactions. Furthermore, injecting edge attributes into the kernel computation reduced MAE even further. Introducing both atom types and an edge type significantly influenced the performance, corresponding to the model learning how to process each specific combination differently, thus adding to its expressivity. It is not surprising since a similar result was obtained by Brandstetter \etal \cite{Brandstetter2022GeometricAP}, who used non-linear message aggregation conditioned on physical information, yielding state-of-the-art performance.

\begin{table}[]
\caption{Mean Absolute Error (MAE) between model predictions and ground truth for the molecular property prediction on the QM9 dataset. Linear steerable convolutions are denoted by $*$. L stands for the number of layers, and W stands for the number of channels in each layer.}
\label{tab:exp4}
\resizebox{\linewidth}{!}{%
\begin{tabular}{lcccccccccccc}
\hline
Task      & $\alpha$     & $\Delta \varepsilon$ & $\varepsilon_{HOMO}$ & $\varepsilon_{LUMO}$ & $\mu$    & $C_\nu$    & G  & H  & $R^2$    & U  & $U_0$ & ZPVE \\ 
Units      & bohr$^3$     & meV & meV & meV & D & cal/mol K    & meV  & meV  &  bohr$^3$  & meV  & meV & meV \\ 
\hline& \\[\dimexpr-\normalbaselineskip+2pt]
NMP\cite{Gilmer2017NeuralMP}      & .092 & 69    & 43   & 38   & .030 & .040 & 19 & 17 & 0.180 & 20 & 20 & 1.50 \\
SchNet\cite{Schtt2017SchNetAC}    & .235 & 63    & 41   & 34   & .033 & .033 & 14 & 14 & 0.073 & 19 & 14 & 1.70 \\
SE(3)-Tr.\cite{Fuchs2020SE3Transformers3R} & .142 & 53    & 35   & 33   & .051 & .054 & -  & -  & -     & -  & -  & -    \\
DimeNet++\cite{Klicpera2020FastAU} & .043  & 32    & 24   & 19   & .029 & .023 & 7  & 6  & 0.331 & 6  & 6  & 1.21 \\
SphereNet\cite{Liu2021SphericalMP} & .046  & 32    & 23   & 18   & .026 & .021 & 8  & 6  & 0.292 & 7  & 6  & 1.12 \\
PaiNN\cite{Schtt2021EquivariantMP} & .045 & 45    & 27   & 20   & .012 & .024 & 7  & 6  & 0.066 & 5  & 5  & 1.28 \\
EGNN\cite{Satorras2021EnEG}    & .071 & 48    & 29   & 25   & .029 & .031 & 12 & 12 & 0.106  & 12 & 12 & 1.55 \\
SEGNN\cite{Brandstetter2022GeometricAP}     & .060 & 42    & 24   & 21   & .023 & .031 & 15 & 16 & 0.660 & 13 & 15 & 1.62 \\
TFN\cite{Thomas2018TensorFN}$*$       & .223 & 58    & 40   & 38   & .064 & .101 & -  & -  & -     & -  & -  & -    \\
Cormorant\cite{Anderson2019CormorantCM}$*$ & .085 & 61    & 34   & 38   & .038 & .026 & 20 & 21 & 0.961 & 21 & 22 & 2.02 \\
L1Net\cite{Miller2020RelevanceOR}$*$     & .088 & 68    & 46   & 35   & .043 & .031 & 14 & 14 & 0.354 & 14 & 13 & 1.56 \\
LieConv \cite{Finzi2020GeneralizingCN}$*$   & .084 & 49    & 30   & 25   & .032 & .038 & 22 & 24 & 0.800  & 19 & 19 & 2.28 \\
\hline& \\[\dimexpr-\normalbaselineskip+2pt]
Ours (W=24, L=15)      & .078 & 45.3    & 24.1  & 22.3   & .033 & .032 & 21.1 & 19.6 & 0.809 & 19.7 & 19.5 & 2.08 \\
Ours (W=16, L=30)      & .077 & 43.5    & 22.8  & 22.7   & .029 & .032 & 19.9 & 21.5 & 0.851 & 22.5 & 22.3 & 1.99 \\ \hline
\end{tabular}
}
\end{table}
\paragraph{Scaling the model up.} Table \ref{tab:exp4} shows the results of an $E(3)$-equivariant Steerable CNN with implicit kernels on the QM9 dataset. Both models we reported here have approx. $2\cdot10^6$ parameters and only differ in length and width of the convolutional layers. For non-energy regression tasks ($\alpha$, $\Delta \varepsilon$, $\varepsilon_{HOMO}$, $\varepsilon_{LUMO}$, $\mu$ and $C_\nu$), we obtain results that are on par with the best-performing message-passing based approaches (we do not compare against transformers). We also indicate that Steerable CNNs with implicit kernels significantly outperform steerable linear convolutions (TFN, LieConv, L1Net) on most tasks. This is consistent with the observation of Brandstetter \etal \cite{Brandstetter2022GeometricAP}, who pointed out that non-linear convolutions generally perform better than linear ones. For the remaining energy variables (G, H, U, $U_0$, ZPVE) and $R^2$, our model significantly falls behind the task-specific benchmark approaches. We theorize that it can be attributed to two factors. First, steerable convolutions generally do not perform well on these tasks compared to problem-tailored frameworks (PaiNN \cite{Schtt2021EquivariantMP}, DimeNet++ \cite{Klicpera2020FastAU}, SphereNet \cite{Liu2021SphericalMP}), also in the non-linear case (e.g. SEGNN \cite{Brandstetter2022GeometricAP}). Second, we hypothesize that molecular connectivity does not produce a sufficient number of atom-atom interactions, which is crucial for the performance of a message-passing-based model \cite{Brandstetter2022GeometricAP}. However, as the goal of the section was to demonstrate the flexibility of implicit kernels that can be conditioned on features of graph edges, we leave developing more involved architectures (e.g. with induced connectivity) for further work. 

\section{Conclusion}
We propose a novel approach for implementing convolutional kernels of Steerable CNNs, allowing for the use of smaller groups and easy integration of additional features under the same framework with minor changes. To avoid analytically solving the group $G$-specific equivariance constraint, we use a $G$-equivariant MLP to parameterize a $G$-steerable kernel basis. We theoretically prove that MLP equivariance is sufficient for building an equivariant steerable convolutional layer. Our implicit representation outperforms a previous general method, offering a way to implement equivariance to various groups for which a custom kernel basis has not been developed. The N-body experiment suggests that this method will be particularly applicable in scenarios where rotational symmetry is disturbed, such as in material science or computational chemistry. The critical force term, which violated rotational symmetry, was effectively captured by our model, while the state-of-the-art model struggled with it.
Additionally, our flexible neural representation enables the introduction of arbitrary features into a convolutional kernel, enhancing the expressivity of Steerable CNNs. We validate this advantage by applying Steerable CNNs to point cloud and molecular graph data and achieving competitive performance with state-of-the-art approaches. In conclusion, we present a simple yet efficient solution for constructing a general kernel basis equivariant to an arbitrary compact group.

\section*{Limitations}
Steerable CNNs generally have high computational complexity and higher memory requirements compared to traditional CNNs. Implicit neural kernels do not help to mitigate the issue, yet they provide additional control over the kernel complexity. We found that when the kernels are parameterized by a single linear layer, the run time slightly decreases compared to the analytical solution\footnote{ as implemented in \href{https://quva-lab.github.io/escnn/}{\texttt{escnn}} \cite{Cesa2022APT}}, while a relative performance gain remains. We suggest that more efficient ways to implement $G$-MLPs would significantly contribute to the acceleration of the method and leave it for further research. From the implementation point of view, the most troublesome was achieving initialization similar to the non-implicit convolutions. We expect the problem to manifest even stronger when dealing with dense convolutions. This, combined with the discretization error of matrix filters, might negatively affect the performance. We are, however, convinced that it is a matter of time before a robust way of initializing $G$-equivariant implicit kernels will be obtained.

\begin{ack}
All the experiments were performed using the Hemera compute cluster of Helmholtz-Zentrum Dresden-Rossendorf and the IvI cluster of the University of Amsterdam. This research results from a collaboration initiated at the London Geometry and Machine Learning Summer School 2022 (LOGML). The authors thank Anna Mészáros, Chen Cai and Ahmad Hammoudeh for their help at the initial stage of the project. They also thank Rob Hesselink for his assistance with visualizations.
\end{ack}

\newpage
\bibliographystyle{ieee_fullname}
\bibliography{main}

\newpage
\appendix
\clearpage
\setcounter{table}{0}
\renewcommand{\thetable}{A\arabic{table}}

\section{Theoretical details}
This section provides an additional mathematical background that might be useful for understanding Steerable CNNs (Section~\ref{sec:app.def}). Besides, we write down the proof of the cornerstone Lemma \ref{eq:lemma}, which allows the application of implicit kernels (Section~\ref{sec:app.proof}). A more comprehensive introduction to representation theory and Steerable CNNs can be found in \cite{weiler2023EquivariantAndCoordinateIndependentCNNs}. Finally, we highlight the difference between our method and the one described in \cite{Cesa2022APT} in terms of prerequisites in Section~\ref{sec:app.comparison_analytical}.
\subsection{Additional details and definitions on group theory}
\label{sec:app.def}
\begin{definition}[Group]
A group is an algebraic structure that consists of a set $G$ and a binary operator $\circ: G \times G \rightarrow G$ called the group product (denoted by juxtaposition for brevity $g \circ h = gh$) that satisfies the following axioms:
\begin{multicols}{2}
\begin{itemize}
  \item $\forall g,h \in G: gh \in G$;
  \item $\exists! \; e \in G: eg=ge=g \; \forall g \in G$;
  \item $\forall g \in G \; \exists! \; g^{-1} \in G: gg^{-1}=g^{-1}g=e$;
  \item $(gh)k = g(hk) \; \forall g, h, k \in G$.
\end{itemize}
\end{multicols}
\end{definition}

\begin{example}[The Euclidean group $E(3)$]
The 3D Euclidean group $E(3)$ comprises three-dimensional translations, rotations, and reflections. These transformations are defined by a translation vector $x\in \mathbb{R}^3$ and an orthogonal transformation matrix $R \in \text{O}(3)$. The group product and inverse are defined as follows:
\begin{multicols}{2}
\begin{itemize}
    \item $g \cdot g' := (R x' + x, R R') \ $;
    \item $g^{-1} := (R^{-1} x, R^{-1}) \ $,
\end{itemize}
\end{multicols}
where $g=(x,R)$ and $g'= (x',R')$ are elements of $E(3)$. These definitions satisfy the four group axioms, establishing E(3) as a group. The action of an element $g \in E(3)$ on a position vector $y$ is given by:
\begin{equation*}
g \cdot y:= R y + x,
\end{equation*}
where $g = (x,R)$ is an element of $E(3)$ and $y \in \mathbb{R}^3$.
\end{example}
\vspace{5pt}
\begin{definition}[Semi-direct product]
Let $N$ and $H$ be two groups, each with their own group product, which we denote with the same symbol $\cdot \;$, and let $H$ act on $N$ by the action $\odot$. Then a (outer) semi-direct product $G = N \rtimes H$, called the semi-direct product of $H$ acting on $N$ , is a group whose set of elements is
the Cartesian product $N \times H$, and which has group product and inverse:
\begin{equation*}
(n,h) \cdot (\hat n, \hat h) = (n \cdot (h \odot \hat n), h \cdot \hat h)
\end{equation*}
\begin{equation*}
(n,h)^{-1} = (h^{-1} \odot n^{-1}, h^{-1})
\end{equation*}
\end{definition}
\noindent for all $n,\hat n \in N$ and $h,\hat h \in H$.

\subsection{Proof of the Lemma \ref{eq:lemma}}
\label{sec:app.proof}
\begin{lemma*}
If a kernel $k$ is parameterized by a $G$-equivariant MLP $\phi$ with input representation $\rho_{st}$ and output representation $\rho_\otimes := \rho_{in} \otimes \rho_{out}$ , i.e. $vec(k)(x) := \phi(x)$, then the kernel satisfies the equivariance constraint in Equation~\ref{eq:eq.constraint} for a compact group $G$.
\end{lemma*}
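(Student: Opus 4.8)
The plan is to rewrite the matrix-valued equivariance constraint of Equation~\ref{eq:eq.constraint} in vectorized form and then observe that it coincides with the equivariance property $\phi$ is assumed to satisfy. The only tool needed is the classical vectorization--Kronecker identity: for matrices of compatible shapes, $vec(AXB) = (B^{T} \otimes A)\, vec(X)$, where $vec$ denotes the column-wise vectorization used in the statement.

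First I would fix $g \in G$ and $x \in \mathbb{R}^{n}$ and apply $vec(\cdot)$ to both sides of Equation~\ref{eq:eq.constraint}. The left-hand side is $vec(k(g.x)) = \phi(g.x)$ by the definition $vec(k)(\cdot) := \phi(\cdot)$. For the right-hand side I take $A = \rho_{out}(g)$, $X = k(x)$, $B = \rho_{in}(g)^{-1}$ in the identity above, obtaining $vec\bigl(\rho_{out}(g)\,k(x)\,\rho_{in}(g)^{-1}\bigr) = \bigl((\rho_{in}(g)^{-1})^{T} \otimes \rho_{out}(g)\bigr)\,\phi(x)$. Hence the constraint is equivalent to $\phi(g.x) = \bigl((\rho_{in}(g)^{-1})^{T} \otimes \rho_{out}(g)\bigr)\,\phi(x)$ for all $g,x$. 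On the other hand, since the input of $\phi$ transforms under $\rho_{st}$ (so $g.x = \rho_{st}(g)x$) and its output under $\rho_\otimes = \rho_{in} \otimes \rho_{out}$, the assumed $G$-equivariance of $\phi$ reads $\phi(g.x) = \bigl(\rho_{in}(g) \otimes \rho_{out}(g)\bigr)\,\phi(x)$. Comparing the two, it remains only to show $(\rho_{in}(g)^{-1})^{T} = \rho_{in}(g)$ for every $g$.

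This last identity is exactly where compactness enters: it is equivalent to $\rho_{in}(g)^{T}\rho_{in}(g) = I$, i.e.\ to $\rho_{in}$ being an orthogonal representation, and by the standard averaging argument every finite-dimensional representation of a compact group is equivalent to an orthogonal one. So, up to a single $g$-independent change of basis $Q$ that can be absorbed into the chosen bases of the feature fields (hence into $\rho_{in}, \rho_{out}, \rho_\otimes$ and $\phi$), we may assume $\rho_{in}(g)$ is orthogonal; I would note that conjugating Equation~\ref{eq:eq.constraint} by $Q$ leaves the argument unchanged precisely because $Q$ does not depend on $g$. With $\rho_{in}$ orthogonal the two expressions for $\phi(g.x)$ agree, so $\phi$ satisfies the vectorized constraint, and applying the linear bijection $vec^{-1}$ returns Equation~\ref{eq:eq.constraint} for $k$. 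I expect the only delicate point to be this transpose-versus-inverse bookkeeping and the orthogonality it requires of $\rho_{in}$ --- the rest is direct substitution --- and it is exactly this step that forces $G$ to be compact rather than an arbitrary group; note that no analogous issue arises on the input side, since $\rho_{st}$ is already orthogonal for $G \leq O(n)$.
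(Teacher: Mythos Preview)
Your proposal is correct and follows essentially the same approach as the paper: vectorize, apply the identity $vec(AXB)=(B^{T}\otimes A)\,vec(X)$, and use compactness to replace $\rho_{in}(g)^{T}$ by $\rho_{in}(g)^{-1}$ via an orthogonal change of basis. The only cosmetic difference is direction---the paper starts from the assumed equivariance of $\phi$ and derives Equation~\ref{eq:eq.constraint}, whereas you vectorize Equation~\ref{eq:eq.constraint} and match it against the equivariance of $\phi$---but the content is identical.
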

\begin{proof}
\label{eq:proof}
By construction, the equivariant MLP satisfies
\begin{equation}
    \phi(\rho_{st}(g) x) = \left(\rho_{in}(g) \otimes \rho_{out}(g)\right) \phi(x) \quad \forall g \in G, x\in \mathbb{R}^n
\end{equation}
We further use the substitution and $\phi \mapsto vec(k(\cdot))$ and find:
\begin{equation}
    vec(k(g.x)) = (\rho_{in}(g) \otimes \rho_{out}(g)) vec (k(x))   
\end{equation}
Now, we make use of the following identity describing the vectorization of a product of multiple matrices, which is the property of the Kronecker product:
\begin{equation}
\label{eq:vec.id}
    vec(ABC) = (C^T \otimes A) \: vec(B)
\end{equation}
Hence, identity \ref{eq:vec.id} allows us to re-write the previous equation as follows:
\begin{equation}
    k(g.x) = \rho_{out}(g) k(x) \rho_{in}(g)^{T}
\end{equation}    
Since we assume $G$ to be compact, its representations can always be transformed to an orthogonal form for which $\rho(g)^T = \rho(g)^{-1}$ via a change of basis. Hence, we find the equivariance constraint defined in equation \ref{eq:eq.constraint}.
\end{proof}

\subsection{Additional details on the comparison with \cite{Cesa2022APT}}
\label{sec:app.comparison_analytical}

To summarize the difference between our method and \cite{Cesa2022APT}, we provide Table~\ref{tab:req7}. There, we highlight the key ingredients required for the implementation of Steerable kernels for a group $G$ in both methods and estimate the "hardness" of obtaining each ingredient. As can be seen, the method described in \cite{Cesa2022APT} required a $G$-steerable basis for $L^2(\mathbb{R}^n)$, while implicit kernels do not. Instead, one only has to provide $G$-equivariant non-linearities, which we assume are available since those are the same non-linearities that will be used in the main model.

\begin{table}[h]
\caption{\label{tab:req7}Key ingredients required to build $G$-steerable kernels with baseline [7] (centre left) vs implicit kernels (centre right). The left column highlights the general prerequisites of Steerable CNNs, and the right column indicates the relative complexity of each ingredient.}
\vspace{5pt}
\centering
\begin{tabular}{ll}
\hline& \\[\dimexpr-\normalbaselineskip+2pt]
Requirement & Hardness \\
\hline& \\[\dimexpr-\normalbaselineskip+2pt]
\multicolumn{2}{c}{\textbf{Design of Steerable CNN architecture} \dag}   \\
\hline& \\[\dimexpr-\normalbaselineskip+2pt]
irreps $\hat{G}$ of $G$ & assumed \\
action of $G$ on $\mathbb{R}^n$ & assumed \\
$G$-equivariant non-linearities & assumed \\
\hline& \\[\dimexpr-\normalbaselineskip+2pt]
\multicolumn{2}{c}{\textbf{Solve constraint with \cite{Cesa2022APT}}} \\
\hline& \\[\dimexpr-\normalbaselineskip+2pt]
CG coefficients for $G$ & numerical\\
intertwiners $E_G(V_\psi)$ for $\psi \in \hat{G}$ & numerical or analytical \\
irreps-decomposition of $\rho_{in}$ and $\rho_{out}$ & numerical \\
$G$-steerable basis for $L^2(\mathbb{R}^n)$ & \textcolor{red!80!black}{handcrafted ad-hoc for each $G$}\\
\hline& \\[\dimexpr-\normalbaselineskip+2pt]
\multicolumn{2}{c}{\textbf{Implicit Kernel (Ours)}} \\
\hline& \\[\dimexpr-\normalbaselineskip+2pt]
CG coefficients for $G$ & numerical \\
intertwiners $E_G(V_\psi)$ for $\psi \in \hat{G}$ & numerical or analytical \\
irreps-decomposition of $\rho_{in}$ and $\rho_{out}$ & numerical \\
$G$-equivariant non-linearities & \textcolor{green!50!black}{available from \dag}\\
\hline
\end{tabular}
% }
\end{table}

\section{Experimental details}
The section aims to provide additional details on model implementation for each particular experiment in Section~\ref{sec:experiments}. First, we describe how we preprocess the input of implicit kernels (relative position $x_i-x_j$, node features $z_i$, $z_j$, and edge features $z_{ij}$), which holds for every model (Section~\ref{sec:app.prep}). Then, we report the architectural details for models used in every experiment (Section~\ref{sec:app.exps}). 

\subsection{Preprocessing kernel's input}
\label{sec:app.prep}
An implicit kernel receives as input the relative position $x_i-x_j$, node features $z_i$ and $z_j$, and edge features $z_{ij}$. The first argument is a set of $3$-dimensional points transforming according to the standard representation $\rho_{st}$. We first compute its \emph{homogeneous polynomial representation} in $\mathbb{R}^3$ up to order $L$ and then batch-normalize it separately for each irrep before passing it to the implicit kernel. The harmonic polynomial $Y_l(x)$ of order $l$ evaluated on a point $x \in \mathbb{R}^3$ is a $2l+1$ dimensional vector transforming according to the Wigner-D matrix of frequency $l$ (and parity $l \mod 2$, when interpreted as an irrep of $O(3)$).
The vector $Y_l(x)$ is computed by projecting $x^{\otimes l} \in \mathbb{R}^{3^l}$ on its only $2l+1$ dimensional subspace transforming under the frequency $l$ Wigner-D matrix.\footnote{
    If $D_l$ is the frequency $l$ Wigner-D matrix, $x$ transforms under $D_l$, which is isomorphic to the standard representation of $SO(3)$.
    Then, $x^{\otimes l}$ transforms under $D_1^{\otimes l}$, i.e. the tensor product of $l$ copies of $D_1$.
    The tensor product of two Wigner-D matrices is well known to decompose as $D_l \otimes D_j \cong \bigoplus_{i=|l-j|}^{l+j} D_i$. By applying this rule recursively, one can show that $D_1^{\otimes l}$ contains precisely one copy of $D_l$.
    We define $Y_l(x)$ as the linear projection of $x^{\otimes l}$ to this subspace.
    Note also that, since this is a linear projection, the definition of $Y_l$ satisfies the defining property of \emph{homogeneous polynomials} $Y_l(\lambda x) = \lambda^l Y_l(x)$.
}
We keep $L = 3$ as we found the choice to be favourable for overall performance on validation data for ModelNet-40 and QM9 experiments. In the N-body experiment, $L=1$ for a fair comparison with the baseline. The remaining arguments form a vector with a pre-defined representation, which we concatenate to the harmonic representation. We extensively compared different preprocessing techniques, and the batch normalization of polynomial representation improved the performance of implicit kernels the most. We attribute it to higher numerical stability as we discovered that the standard deviation of MLP's output is the lowest in the case. The output of the implicit representation is a vector which we multiply with a Gaussian radial shell $\phi(x) = exp (-0.5 \cdot||x||_2^2 / \sigma^2)$ where $\sigma$ is a learnable parameter. This is coherent with the kernel basis typically used in literature \cite{Weiler2018-STEERABLE} - spherical harmonics modulated by a Gaussian radial shell.

\subsection{Model implementation}
\label{sec:app.exps}
\paragraph{N-body}

\begin{table}
\caption{\label{tab:app_nbody}Mean square error in the N-body system experiment vs stiffness of the strings from particles to the $XY$-plane. Stiffness practically indicates the degree of breaking the $SO(3)$ symmetry.}
\centering
\resizebox{\linewidth}{!}{%
\begin{tabular}{cccccccccc}
\hline
Stiffness & 0 & 1 & 5 & 10& 25& 50& 100    & 200    & 1000   \\ \hline
MPNN& 0.0022 & 0.0031 & 0.0068 & 0.0087 & 0.0030 & 0.0162 & 0.0560 & 0.0978 & 0.1065 \\
O(3)-SEGNN & 0.0009 & 0.0092 & 0.0117 & 0.0183 & 0.0291 & 0.0151 & 0.0229 & 0.0938 & 0.1313 \\ 
SO(2)-CNN-IK & 0.0010 & 0.0010 & 0.0009 & 0.0008 & 0.0008 & 0.0008 & 0.0014 & 0.0043 & 0.0162 \\ \hline
\end{tabular}
}
\end{table}

We used the reported configuration of the SEGNN model according to the official repository \cite{Brandstetter2022GeometricAP}, which has around $10^{4}$ parameters. We only modified the input of the model such that it takes the equilibrium length of the attached XY spring instead of the product of charges as in the original formulation, which was a trivial representation as well. As a result, the input representation consisted of 2 standard representations (position and velocity) and a trivial representation (spring's equilibrium length). The training was performed precisely according to the official configuration. For the non-equivariant baseline, we substituted every equivariant MLP in SEGNN with its non-equivariant counterpart and adjusted the number of parameters to match the one of the original.

To form a dataset, 3000 trajectories were generated with random initial velocities and equilibrium lengths of XY-plane strings for training and 128 for validation and testing. $G$-equivariant MLPs had 3 layers with 16 hidden fields. We used an embedding linear layer followed by 4 steerable convolutions and a $G$-equivariant MLP with 2 hidden layers applied to each node separately. The hidden representation was kept the same across every part of the model and had 16 steerable vector fields transforming under the spherical quotient representation band-limited to maximum frequency $1$. The total number of parameters was approximately equal to the one of the baseline. The model returned a coordinate vector transforming under the standard representation for each particle as output. We trained each model using a batch size of 128 for 200 epochs until reaching convergence. As in the case of SEGNN, we minimized the MSE loss. We used AdamW optimizer with an initial learning rate of $10^{-2}$. The learning rate was reduced by $0.5$ every 25 epochs. The training time, on average, was 5 min.

\paragraph{ModelNet-40}
\begin{wraptable}{R}{0.65\textwidth}
\vspace{-15pt}
\centering
\caption{Number of channels in each convolutional layer of a steerable CNN (see section \ref{sec:mn}).}
\vspace{15pt}
\label{tab:exp2_setup}
\begin{tabular}{llcc}
\hline
G  & kernel & channels & \#par, $\cdot 10^3$ \\ \hline& \\[\dimexpr-\normalbaselineskip+2pt]
$M$         & \begin{tabular}[c]{@{}l@{}}Implicit\\ Standard\end{tabular} & \begin{tabular}[c]{@{}c@{}}20 20 20 20 20 128\\ 20 20 20 20 20 128\end{tabular} & \begin{tabular}[c]{@{}c@{}}122\\ 144\end{tabular} \\
$Inv$       & \begin{tabular}[c]{@{}l@{}}Implicit\\ Standard\end{tabular} & \begin{tabular}[c]{@{}c@{}}20 20 20 20 20 128\\ 20 20 20 20 20 128\end{tabular} & \begin{tabular}[c]{@{}c@{}}122\\ 144\end{tabular} \\
$SO(2) \rtimes Inv$ & \begin{tabular}[c]{@{}l@{}}Implicit\\ Standard\end{tabular} & \begin{tabular}[c]{@{}c@{}}11 11 12 12 12 128\\ 20 25 25 30 30 128\end{tabular} & \begin{tabular}[c]{@{}c@{}}588\\ 557\end{tabular} \\
$SO(2)$     & \begin{tabular}[c]{@{}l@{}}Implicit\\ Standard\end{tabular} & \begin{tabular}[c]{@{}c@{}}15 15 20 30 30 128\\ 30 30 40 40 40 128\end{tabular} & \begin{tabular}[c]{@{}c@{}}565\\ 561\end{tabular} \\
$SO(2) \rtimes F$   & \begin{tabular}[c]{@{}l@{}}Implicit\\ Standard\end{tabular} & \begin{tabular}[c]{@{}c@{}}12 12 12 12 12 128\\ 20 25 25 30 30 128\end{tabular} & \begin{tabular}[c]{@{}c@{}}580\\ 557\end{tabular} \\
$SO(2) \rtimes M$   & \begin{tabular}[c]{@{}l@{}}Implicit\\ Standard\end{tabular} & \begin{tabular}[c]{@{}c@{}}15 20 20 20 20 128\\ 30 40 40 40 40 128\end{tabular} & \begin{tabular}[c]{@{}c@{}}592\\ 592\end{tabular} \\
$SO(3)$     & \begin{tabular}[c]{@{}l@{}}Implicit\\ Standard\end{tabular} & \begin{tabular}[c]{@{}c@{}}10 10 10 10 20 128\\ 30 30 30 30 30 128\end{tabular} & \begin{tabular}[c]{@{}c@{}}160\\ 154\end{tabular} \\
$O(3)$      & \begin{tabular}[c]{@{}l@{}}Implicit\\ Standard\end{tabular} & \begin{tabular}[c]{@{}c@{}}10 10 10 10 20 128\\ 30 30 30 30 30 128\end{tabular} & \begin{tabular}[c]{@{}c@{}}140\\ 128\end{tabular}
\end{tabular}
\vspace{-15pt}
\end{wraptable}

In the generalizability experiments described in Section \ref{sec:mn}, we maintained the configuration of the $G$-equivariant MLP as follows: two linear layers with 8 hidden fields and spherical quotient ELU with a maximum frequency of 2 in between. Each model consisted of an embedding linear layer, 6 steerable convolutions followed by spherical quotient ELU and batch normalization, and an MLP. The initial layer took the normals of each point in the point cloud with the standard representation as input. We utilized steerable vector spaces up to order 2 in each convolutional layer. To ensure comparability, we only varied the number of channels in each layer, aiming for a similar overall number of parameters among the models. The number of channels for each layer and the total number of parameters for each model are provided in Table \ref{tab:exp2_setup}. The last layer generated a 128-dimensional vector comprising scalar features for each node. Global max pooling was applied to obtain a 128-dimensional embedding of the point cloud. We further employed a 2-layer MLP (128 $\xrightarrow{ELU}$ 128 $\xrightarrow{ELU}$ 40) to calculate the class probability. In each convolutional layer, the point cloud was downsampled, resulting in the following sequence of input points: $1024\rightarrow256\rightarrow64\rightarrow64\rightarrow64\rightarrow16\rightarrow16$. 

For the experiment results presented in Table \ref{tab:mn}, we scaled up the $SO(2)$-equivariant model by increasing the number of layers in implicit kernels to 3. We also used 6 convolutional layers with 20 channels each and employed the following downsampling: $1024\rightarrow256\rightarrow128\rightarrow128\rightarrow128\rightarrow64\rightarrow64\rightarrow64$. Additionally, skip connections were added between the layers, maintaining the same number of input points.

We trained each model using batch size 32 for 200 epochs, which we found to be sufficient for convergence. We minimized the cross entropy loss with label smoothing  \cite{Wang2019DynamicGC}. The position of each point in a point cloud is normalized to the interval $\left[-1,1\right]^3$ during the preprocessing. We used AdamW optimizer \cite{Loshchilov2019DecoupledWD} with an initial learning rate $10^{-3}$. We also decayed the learning rate by $0.5$ after every 25 epochs. The training time on average was 3h on an NVIDIA Tesla V100 GPU and varied across different $G$. For the experiment reported in Table~\ref{tab:mn}, the training time was around 30 hours on the same single GPU.

\paragraph{QM9}
During the preprocessing, we normalized the target variable by subtracting the mean and dividing it by the standard deviation computed on the training dataset. Each model has the following structure: embedding layer $\rightarrow$ steerable convolutional layers $\rightarrow$ global mean pooling $\rightarrow$ 2-layer MLP. The embedding layer consists of 3 parts: linear layer $\rightarrow$ learnable tensor product $\rightarrow$ spherical quotient ELU. First, it takes one-hot encoding of an atom type and applies a linear transformation. The learnable tensor product computes the tensor product of each field with itself to generate an intermediate feature map. Then, a learnable linear projection is applied to the feature map, which yields a map from trivial representations to spherical representations up to order $L=2$. In the next step, a sequence of convolutional layers with skip-connection and quotient ELU non-linearities is applied. We use steerable feature vectors with a maximum order of 2 in each convolutional layer. The final classification MLP is defined as follows: 128 $\xrightarrow{ELU}$ 128 $\xrightarrow{ELU}$ 1.

For the flexibility experiment, we employ steerable CNNs with 11 convolutional layers with residual connections and 24 channels. For the final performance indicated in Table \ref{tab:exp4}, we scale the model up and increase its overall width and length. Implicit kernels are parameterized with $O(3)$-equivariant MLP with 3 linear layers with 16 fields and spherical quotient ELU in between. Concerning the number of parameters, the model with standard steerable kernels had $140$k parameters, 1-layer $G$-MLP - $356$k parameters, 2-layer $G$-MLP - $1.1$M parameters, 3-layer $G$-MLP - $1.2$M parameters.

We optimized the number of layers and channels for the $\varepsilon_{HOMO}$ regression task and the number of training epochs for the $G$ regression task. We trained each model using batch size 128 for either 125 epochs (the flexibility experiment) or 250 epochs (the final experiment). Each model is optimized with AdamW optimizer with an initial learning rate of $5\cdot10^{-4}$. We use learning rate decay by 0.5 every 25 epochs. It takes around 20 minutes per epoch on an NVIDIA Tesla V100 GPU.

\subsection{Depth-width trade-off.}
Romero \etal \cite{Romero2022CKConvCK} indicated that implicit representation of convolutional kernels allows one to build a shallower model compared to standard CNNs. We, however, did not obtain a similar pattern. Even though the width of standard steerable convolutional kernels must be pre-specified, keeping it sufficiently large yields the same scaling pattern as for implicit ones. We note that implicit kernels can adapt their width and thus the field of view, which is not the case for standard steerable kernels. We hypothesize that the result might change on different datasets where long-range dependencies play a more important role, e.g. in sequential data, as shown in \cite{Romero2022CKConvCK}.

\subsection{Complexity and training time}
\begin{figure}
\centering
\includegraphics[width=0.8\textwidth]{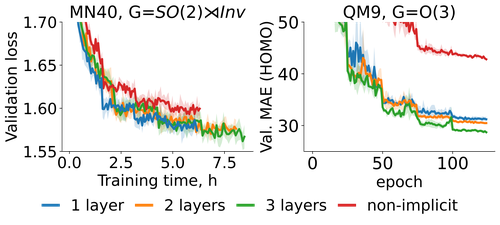}
\caption{\footnotesize Learning curve for different steerable kernels.}
\label{fig:learning_curve}
\end{figure} 
Overall, implicit kernels offer a flexible alternative to standard steerable kernels, potentially at an increased computational cost compared to optimized implementations of handcrafted steerable bases.
However, the additional cost depends only on the MLP's complexity and can be controlled during the design. We demonstrate the effect of the MLP's depth on performance and training time in Fig.~\ref{fig:learning_curve}. In practice, we find that two layers provide the best trade-off, but increased complexity might be beneficial when including additional attributes (Fig.~\ref{fig:learning_curve}, right). We also demonstrate that training and inference time are approximately equal for standard steerable kernels \cite{Weiler2018-STEERABLE} and implicit kernels parameterized by a single layer. Finally, we only experienced training challenges (e.g. instabilities) when using MLPs with $\#\text{layers} > 3$. To overcome these issues, we used batch normalization of spherical harmonics, i.e. the input of MLPs, which proved to be effective. 

Importantly, implicit kernels don't have substantially more hyperparameters compared to the method suggested in \cite{Cesa2022APT}. One only has to tune the parameters of a $G$-MLP, which are arguably easier to interpret than the parameters of handcrafted bases that are typically group-specific.

\end{document}